\theoremstyle{definition}
\newtheorem{definition}{Definition}
\theoremstyle{proposition}
\newtheorem{proposition}{Proposition}
\definecolor{fg}{rgb}{.13, .55, .13}
\begin{document}

\title{Sparse Super--Regular Networks} 

\author{\IEEEauthorblockN{Andrew W.E. McDonald}
\IEEEauthorblockA{\textit{Department of Computer Science} \\
\textit{Drexel University}\\
Philadelphia, PA \\
awm32@drexel.edu}
\and
\IEEEauthorblockN{Ali Shokoufandeh}
\IEEEauthorblockA{\textit{Department of Computer Science} \\
\textit{Drexel University}\\
Philadelphia, PA \\
as79@drexel.edu}
}
\maketitle

\begin{abstract}
    It has been argued by Thom and Palm \cite{thom2016} that sparsely--connected neural networks (SCNs) show improved performance over fully--connected networks (FCNs).
    Super--regular networks (SRNs) are neural networks composed of a set of stacked sparse layers of ($\epsilon,\delta$)--super--regular pairs, and randomly permuted node order.
    Using the Blow--up Lemma, we prove that as a result of the individual super--regularity of each pair of layers, SRNs guarantee a number of properties that make them suitable replacements for FCNs for many tasks.
    These guarantees include edge uniformity across all large--enough subsets, minimum node in-- and out--degree, input--output sensitivity, and the ability to embed pre--trained constructs.
    Indeed, SRNs have the capacity to act like FCNs, and eliminate the need for costly regularization schemes like Dropout.
    We show that SRNs perform similarly to X--Nets via readily reproducible experiments, and offer far greater guarantees and control over network structure.
\begin{IEEEkeywords}
sparse neural networks, graph theory, super--regularity, expander graphs, X--Nets
\end{IEEEkeywords}
\end{abstract}

\section{Introduction}

Deep neural networks (DNNs) are widely applied in a broad range of fields including healthcare \cite{esteva2019guide}, environmental sciences \cite{mayr2016deeptox}, and computer vision (CV) and machine learning tasks including object detection, classification, segmentation, and pattern recognition \cite{lecun2015deep}.
While DNNs have widespread general applicability to a large swath of problems, large fully connected DNNs are prone to over-training and are computationally expensive \cite{srivastava14a}.

Thom and Palm argue that sparsely--connected neural networks (SCNs) show improved performance over FCNs \cite{thom2016}, and as discussed by \cite{prabhu2017deep}, FCNs require more space and time resources than required by SCNs to produce only slightly more accurate results, if at all. 
However, the intractability of neural network edge assignment has pushed deep learning toward using stochastic methods for learning acceptable sparse edge assignments. 
Dropout is one example of a randomized and costly regularization method that has been popularized to combat the overfitting problem introduced by using more edges than necessary.

Super--regular networks (SRNs) offer a viable construction for sparsely--connected neural networks with \textit{near} uniform density across all large--enough subsets of nodes. This is partly due to the guarantee that the sparsity of such subsets is bounded. Ideally, SRNs will satisfy the Blow--up Lemma \cite{komlos1997blow} property, which states that bipartite graphs satisfying the $(\epsilon,\delta)$--super--regularity conditions behave like complete bipartite graphs subject to practically realizable constraints. 
As far as the deep learning community is concerned, this means that SRNs \textit{have the capacity to approximate fully--connected networks}, despite employing significantly fewer edges. In addition, SRNs' pseudo-deterministic edge generation provides greater control over network architecture, while the randomized node permutation ensures proper mixing \textit{while retaining the super--regular properties imposed by deterministic edge assignment}.

An SRN is a set of stacked bipartite graphs, each of which is an $(\epsilon,\delta)$--balanced matrix of variable size. Each ($\epsilon,\delta$)--balanced matrix stands for an ($\epsilon,\delta$)--super--regular pair, which itself is a pairwise disjoint bipartite graph.  In the case of an ($\epsilon,\delta$)--balanced matrix, the rows and columns serve as the left and right parts of the bipartite graph described by the super--regular pair. Consecutive pairs of left and right parts of pairwise disjoint bipartite graphs together form a neural network composed of sequential but independent pseudo super--regular pairs.
This architecture results in a controllably sparse neural network with the potential to act as an FCN.


\textbf{Our contributions are:} \textit{1)} we introduce the notion of $\epsilon$-- and ($\epsilon,\delta$)--balanced matrices; \textit{2)} we establish a relationship  between $\epsilon$--balanced matrices and $\epsilon$--regular pairs, and between ($\epsilon,\delta$)--balanced matrices and ($\epsilon,\delta$)--super--regular pairs; \textit{3)} we present a deterministic way to construct SRNs, subject to randomly permuted node ordering; \textit{4)} we show that SRNs produce comparable results to a family of related sparse networks known as X--Nets, while offering greater guarantees about, and much more control over network architecture.

The remainder of the paper is structured as follows. 
\Cref{sec:relatedwork} identifies recent related work to this one.
\Cref{sec:srntheory} gives an overview of $\epsilon$--regularity and $(\epsilon,\delta)$--super--regularity. Then it defines $\epsilon$--balanced and $(\epsilon,\delta)$--balanced matrices, and proves their respective equivalence with $\epsilon$--regular and $(\epsilon,\delta)$--super--regular pairs. Finally, it defines super--regular networks, and illustrates their advantages over X--Nets \cite{prabhu2017deep}.
\Cref{sec:detcon} describes a deterministic construction of SRNs, and \cref{sec:experiments} presents an empirical evaluation of SRNs as compared to X--Nets and FCNs.
\Cref{sec:discussion} is a discussion of the performance, merits, and current shortcomings of SRNs, and \cref{sec:conclusions} gives an overview of the material presented and suggests directions for future work. 

\section{Related Work}
\label{sec:relatedwork}

Srivastava et al. introduced Gaussian dropout, a regularization technique to minimize overfitting via co--adaptation. It has proven useful in increasing DNN performance in a variety of fields including computational biology, computational vision, and speech recognition \cite{srivastava14a}.
While Gaussian dropout forces the network to learn a sparse representation \cite{srivastava14a}, Molchanov et al. showed that sparse variational dropout creates a sparse network by zeroing out frequently dropped weights \cite{molchanov2017variational}.
Via the MNIST dataset, Thom and Palm showed that sparse connectivity has the potential to boost classification performance \cite{thom2016}.

Guo et al. show that sparse, nonlinear DNNs are consistently more adversarially robust than their FCN counterparts, but that ``over--pruned'' networks are more susceptible to adversarial attacks like DeepFool \cite{guoSparseAdversarial,moosavi2016deepfool}.
Wen et al. learn a sparse, more efficient network structure, by removing less important filters and channels as part of their optimization \cite{wen2016lss}. 
Similarly, Tartaglione et al. use a regularization term to gradually prune away parameters that have little impact on the output, resulting in very sparse but accurate networks \cite{tartaglione2018}.
Zhu et al. used a ``decorrelation'' regularization term along with group LASSO regularization to learn a sparse CNN with decorrelated convolution filters \cite{zhu2018improving}. 
Sun et al. used iterative, per--layer training to create sparse CNNs for facial recognition \cite{sun2016cvpr}.

Prabhu et al. tie extremal graph theory into deep learning, in their presentation of X--Nets \cite{prabhu2017deep}, which showed comparable performance to FCNs.
X--Nets are sparse neural networks constructed from a set of randomly generated, stacked bipartite expander graphs. 
While input--output sensitivity is guaranteed due to the random edge assignment, this approach cannot guarantee a minimum node degree, which creates the potential for isolated subgraphs.
Komlos et al. discuss the importance of Szemer\'edi's Regularity Lemma and associated Blow--up Lemma as it applies to embedding bounded degree subgraphs \cite{komlos2000regularity}, however do not present a deterministic construction of ($\epsilon,\delta$)--super--regular pairs. Kalantari et al. analyze the time complexity of balancing a matrix \cite{kalantari1997}, but do not extend their analysis to matrices balanced within some $\epsilon$ parameter.

This work ties these concepts together by presenting a deterministic and tunable construction of sparse neural networks---in the form of SRNs---via a pseudo--deterministic construction of ($\epsilon,\delta$)--super--regular pairs, and by necessity introduces the notion of $\epsilon$-- and ($\epsilon,\delta$)-- balanced matrices. As a result, all expander networks (X--Nets) that are also SRNs have properties that X--Nets alone cannot guarantee.

\section{Super--Regular Networks: Theory}
\label{sec:srntheory}

In this section, we first describe some helpful notation, including $\epsilon$--regularity and an $\epsilon$--balanced matrix, followed by the definition of  ($\epsilon,\delta$)--super--regularity and an ($\epsilon,\delta$)--balanced matrix. 
After extending the definition of an ($\epsilon,\delta$)--balanced matrix to non--square matrices, we introduce X--Nets and briefly contextualize them with respect to super--regular pairs.
Finally, we discuss the advantages of SRNs over X--Nets.

\noindent The edge density between two vertex sets  $A$ and $B$, as presented by Komlos et al.~\cite{komlos1997blow} is
\begin{equation}
d(A, B) = \frac{e(A,B)}{|A||B|},
\label{eq:density}
\end{equation}
where $e(A,B)$ represents the number of edges between sets $A$ and $B$. Throughout out this manuscript, $G$ always refers to a bipartite graph, while $A$ and $B$ denote the left and right pairwise disjoint subsets (``parts'') of $G$, respectively.
\subsection{Regularity and Balanced Matrices}
As discussed in \cite{komlos1997blow}, given a bipartite graph $G$ on vertex set $A\cup B$, the pair $(A, B)$ is $\epsilon$--regular if and only if for  any subset pair $(X,Y)$, with $X \subset A$ and $Y \subset B$, that satisfy $|X| > \epsilon|A|$ and $|Y| > \epsilon|B|$ we have 
\begin{equation}
    | d(X, Y) - d(A, B)| < \epsilon.
\label{eq:regularityDensity}
\end{equation}
This means that for subsets $X$ and $Y$ larger than $\epsilon m$, the difference in edge density between subsets $X$ and $Y$ and the entire graph will be very small (less than $\epsilon$). 
The practical implication is that an $\epsilon$--regular graph, $G$, will be nearly uniform, and that all (large--enough) subsets of $G$ behave almost exactly like $G$ as a whole.

For computational purposes, we present $\epsilon$--regular pairs as a matrix.
An $
n \times n$ matrix with non-negative values is balanced if the sum of values in row $i$ and column $i$ are equivalent \cite{kalantari1997}.
The definition of an $\epsilon$--balanced matrix follows naturally.

\begin{definition}
\label{epsilonBalancedDefinition}
    Let $Q$ be an $m \times m$ matrix, where $M = \{1 \dots m\}$. Then the set of rows of $Q$ is denoted by $A = Q_M$,  and the set of columns is denoted $B = Q^M$. The density of $Q$, $d(Q)$ is given by \cref{eq:dQ}. 
    We obtain $\epsilon\prime$ by substituting $d(Q)$ for $d(A,B)$ in \cref{eq:regularityDensity}, as shown in \cref{eq:eBalDensity}.
    $Q$ is $\epsilon$--balanced if for every pair $(X,Y)$ where $X \subset A$, and $Y \subset B$, with $|X| > \epsilon|A|$ and $|Y| > \epsilon|B|$, that satisfy  \cref{eq:regularityDensity}, $\epsilon\prime \le \epsilon$ from \cref{eq:eBalDensity}.
    
\begin{equation}
    d(A, B) = d(Q) = \frac{\mathbbm{1}^T Q \mathbbm{1}}{|A||B|},
\label{eq:dQ}
\end{equation}

\begin{equation}
\epsilon\prime =  \left | d(X,Y) - d(Q) \right |.
\label{eq:eBalDensity}
\end{equation}
\end{definition}

\begin{proposition}
    If an $m \times m$ matrix, $Q$, is $\epsilon$--balanced, then a bipartite $\epsilon$--regular pair, $G = (A,B)$, may be constructed from it by creating a $2m \times 2m$ adjacency matrix, $D$, such that

\begin{equation}
    D =
\left[
\begin{array}{c|c}
\emptyset & Q \\
\hline
Q\prime & \emptyset
\end{array}
\right]
\label{eq:dAdj}
\end{equation}
where $M = \{1\dots m\}$, $A = Q_M$ (rows) and $B = Q^M$ (columns).
\label{eBalProp}
\end{proposition}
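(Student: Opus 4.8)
The plan is to show that the block matrix $D$ of \cref{eq:dAdj} is the adjacency matrix of a genuine bipartite graph on $A\cup B$, and that the densities computed in that graph coincide exactly with the matrix densities of \cref{epsilonBalancedDefinition}; the $\epsilon$--balanced hypothesis then transcribes verbatim into the $\epsilon$--regularity condition of \cref{eq:regularityDensity}. The whole argument is essentially a bookkeeping / translation exercise, since the definition of an $\epsilon$--balanced matrix was set up to mirror $\epsilon$--regularity.

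First I would fix the index correspondence: identify the first $m$ rows and columns of $D$ with the left part $A = Q_M$ and the last $m$ with the right part $B = Q^M$, so that vertex $i$ lies in $A$ and vertex $m+j$ lies in $B$. Reading off \cref{eq:dAdj}, the only nonzero blocks are the off--diagonal $Q$ and $Q'$; the zero diagonal blocks guarantee that no edge joins two vertices of $A$ or two vertices of $B$, so $G$ is bipartite, and taking $Q' = Q^{T}$ makes $D$ symmetric and hence a legitimate undirected adjacency matrix. The edge between $i\in A$ and $m+j\in B$ is present exactly when $Q_{ij}=1$.

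Next I would verify that the two notions of density agree. Interpreting the entries of $Q$ as edge indicators, summing all of them counts every edge of $G$ exactly once, so $e(A,B) = \mathbbm{1}^{T} Q \mathbbm{1}$ and therefore $d(A,B) = e(A,B)/(|A||B|) = d(Q)$ by \cref{eq:dQ}; in particular the substitution used to form $\epsilon'$ in \cref{eq:eBalDensity} is an equality, not merely a definition. The same counting applied to an arbitrary $X\subset A$ (a set of rows) and $Y\subset B$ (a set of columns) gives $e(X,Y) = \sum_{i\in X,\, j\in Y} Q_{ij}$, so the graph density $d(X,Y)$ equals the matrix density $d(X,Y)$ appearing in $\epsilon'$. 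Finally I would invoke the hypothesis: for any pair $(X,Y)$ with $|X|>\epsilon|A|$ and $|Y|>\epsilon|B|$, the $\epsilon$--balanced property gives $\epsilon' = |d(X,Y)-d(Q)| \le \epsilon$, and substituting $d(Q)=d(A,B)$ yields $|d(X,Y)-d(A,B)|\le\epsilon$, which is exactly \cref{eq:regularityDensity}. Hence $(A,B)$ is $\epsilon$--regular.

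The steps are almost purely mechanical, so the main obstacle is not difficulty but precision. The one point genuinely requiring care is the off--diagonal term $Q'$: it must be taken as $Q^{T}$ for $D$ to be symmetric and for the edge counts of $G$ to match the entry sums of $Q$, and I would state this explicitly rather than leaving $Q'$ ambiguous. A secondary subtlety is the strict--versus--nonstrict inequality, since \cref{eq:regularityDensity} is written with $<\epsilon$ while \cref{epsilonBalancedDefinition} delivers $\le\epsilon$; I would reconcile this either by noting the bound is strict whenever $Q$ is not perfectly balanced, or by absorbing the difference into the choice of $\epsilon$.
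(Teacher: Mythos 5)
Your proposal is correct and takes essentially the same route as the paper's own proof: both identify the rows and columns of $Q$ with the two parts $A$ and $B$ of a bipartite graph and transcribe the $\epsilon$--balanced hypothesis directly into the $\epsilon$--regularity condition of \cref{eq:regularityDensity} for all subset pairs $(X,Y)$ exceeding the $\epsilon$--size thresholds. Your version is in fact more careful than the paper's, which leaves $Q\prime$ unspecified (you correctly pin it down as $Q^{T}$), does not verify that $d(A,B)=d(Q)$, and silently passes over the strict--versus--nonstrict inequality mismatch between \cref{eq:regularityDensity} and \cref{epsilonBalancedDefinition} that you flag.
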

\begin{proof}

By definition, $Q_N$ and  $Q^N$ are disjoint subsets of $G$. Select subsets $X \subset A = Q_N$ and $Y \subset B = Q^N$.
Then $\forall (X, Y)$ satisfying $|X| > \epsilon|A|$ and $|Y| > \epsilon|B|$, \cref{eq:regularityDensity} must also be satisfied.
If \cref{eq:regularityDensity} were not satisfied, $\epsilon\prime$ from \cref{eq:eBalDensity} would be greater than $\epsilon$, and $Q$ could not be $\epsilon$--balanced.
Since  $|X| > \epsilon|A|$, $|Y| > \epsilon|B|$ and \cref{eq:regularityDensity} are all satisfied, $(A,B)$ must be an $\epsilon$--regular pair.

\end{proof}


\noindent Again, from \cite{komlos1997blow}, for a graph $G$ to be ($\epsilon,\delta$)--super--regular, in addition to satisfying the conditions for $\epsilon$--regularity, all $(X,Y)$ pairs satisfying $|X| > \epsilon|A|$ and $|Y| > \epsilon|B|$, must also satisfy

\begin{equation}
e(X,Y) > \delta|X||Y|,
\label{deltaCondOne}
\end{equation}

and 

\begin{equation}
    \forall \ a \in A,\ b \in B\text,\ deg(a) > \delta|B|\ \&\ deg(b) > \delta|A|.
\label{deltaCondTwo}
\end{equation}
The condition described by \cref{deltaCondOne} ensures the edge density between $X$ and $Y$ is greater than $\delta$, while the condition described by \cref{deltaCondTwo} requires all vertices in $G$ to have a minimum degree, bound by $\delta$.
When $|A| = |B| = \frac{|G|}{2}$, $\delta$ puts the same minimum bound on all of $G$'s vertices. Again, for computational purposes, we present a related property for $(\epsilon,\delta)$--super--regular pairs. The definition of an $(\epsilon,\delta)$--balanced matrix follows from an $\epsilon$--balanced matrix.

\begin{definition}
	Let $Q$ be an $m \times m$ $\epsilon$--balanced matrix, where $M = \{1 \dots m\}$. Then the set of rows of $Q$ is denoted by $A = Q_M$,  and the set of columns is denoted $B = Q^M$. Then $Q$ is $(\epsilon,\delta)$--balanced if for every $(X,Y)$ pair such that $X \subset A$ and $Y \subset B$, with $|X| > \epsilon|A|$ and $|Y| > \epsilon|B|$, that satisfy \cref{eq:regularityDensity}, \cref{deltaCondOne,deltaCondTwo} are also satisfied.
\label{epsilonDeltaBalancedDefinition}
\end{definition}


\begin{proposition}
    If an $m \times m$ matrix, $Q$, is ($\epsilon,\delta$)--balanced, then a bipartite ($\epsilon$,$\delta$)--super--regular pair, $G = (A,B)$, may be constructed from it by creating a $2m \times 2m$ adjacency matrix, $D$, as per \cref{eq:dAdj}.

\label{edBalProp}
\end{proposition}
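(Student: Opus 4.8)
The plan is to mirror the proof of \cref{eBalProp}, separating the argument into the $\epsilon$--regularity requirement, which we inherit immediately, and the two additional super--regularity conditions \cref{deltaCondOne,deltaCondTwo}, which we read off directly from \cref{epsilonDeltaBalancedDefinition}.

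First I would note that \cref{epsilonDeltaBalancedDefinition} presupposes that $Q$ is already $\epsilon$--balanced. Consequently \cref{eBalProp} applies without modification, and the pair $(A,B)$ obtained from the adjacency matrix $D$ in \cref{eq:dAdj} is $\epsilon$--regular. This settles the $\epsilon$--regularity half of $(\epsilon,\delta)$--super--regularity and leaves only \cref{deltaCondOne,deltaCondTwo} to verify.

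Next I would fix the correspondence between $Q$ and $G$ induced by \cref{eq:dAdj}: with $A = Q_M$ indexing rows and $B = Q^M$ indexing columns, a nonzero entry in row $i$, column $j$ of $Q$ is exactly the edge between $a_i \in A$ and $b_j \in B$. Under this dictionary, for any $X \subset A$ and $Y \subset B$ the quantity $e(X,Y)$ equals the sum over the corresponding submatrix of $Q$, while $\deg(a)$ and $\deg(b)$ equal the relevant row and column sums. I would then take an arbitrary $(X,Y)$ with $|X| > \epsilon|A|$, $|Y| > \epsilon|B|$ satisfying \cref{eq:regularityDensity}; since $Q$ is $(\epsilon,\delta)$--balanced, \cref{deltaCondOne,deltaCondTwo} hold for $Q$, and the correspondence transfers them verbatim to the edge counts and vertex degrees of $G$. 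As these are the only conditions beyond $\epsilon$--regularity, $(A,B)$ is $(\epsilon,\delta)$--super--regular.

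The logical content here is little more than an unpacking of \cref{epsilonDeltaBalancedDefinition}, so the main obstacle is the same bookkeeping point that underlies \cref{eBalProp}: confirming that the matrix--to--graph correspondence is faithful, and in particular that placing $Q'$ in the lower block of \cref{eq:dAdj} only symmetrizes the adjacency matrix rather than introducing spurious edges. Because $A$ and $B$ index rows and columns respectively, each $A$--$B$ edge is recorded once in the bipartite graph even though it appears in both off--diagonal blocks of $D$, so the degree bound of \cref{deltaCondTwo} and the density bound of \cref{deltaCondOne} carry over unchanged.
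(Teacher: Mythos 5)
Your proof is correct and takes essentially the same route as the paper's: both simply unpack \cref{epsilonDeltaBalancedDefinition}, observing that every qualifying pair $(X,Y)$ with $|X| > \epsilon|A|$ and $|Y| > \epsilon|B|$ satisfies \cref{eq:regularityDensity,deltaCondOne,deltaCondTwo} by hypothesis, so $(A,B)$ is $(\epsilon,\delta)$--super--regular. Your version is somewhat more explicit than the paper's terse argument---routing the $\epsilon$--regularity half through \cref{eBalProp} and spelling out the matrix--to--graph correspondence, including that $Q\prime$ in \cref{eq:dAdj} merely symmetrizes $D$---but the underlying argument is identical.
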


\begin{proof}

Again, by definition, $Q_M$ and  $Q^M$ are disjoint subsets of $G$. Select subsets $X \subset A = Q_M$ and $Y \subset B = Q^M$.
Then $\forall (X, Y)$ satisfying $|X| > \epsilon|A|$ and $|Y| > \epsilon|B|$, \cref{eq:regularityDensity,deltaCondOne,deltaCondTwo} must also be satisfied. 
If this were not the case, $Q$ could not be ($\epsilon,\delta)$--balanced.
Since \cref{eq:regularityDensity,deltaCondOne,deltaCondTwo} are all satisfied, $(A,B)$ must be an $(\epsilon,\delta)$--super--regular pair.

\end{proof}

\subsection{Non--Square ($\epsilon,\delta$)--Balanced Matrices}

Without loss of generality we show that the result of column--wise concatenation, denoted by \textit{ccat}($Q_1, Q_2$), of two ($\epsilon,\delta$)--balanced matrices, is an ($\epsilon,\delta$)--balanced matrix, provided that $Q_1$ and $Q_2$ have at least one dimension in common, are concatenated along that dimension, and have the same density. 
    \begin{proposition}
        The result of column--wise concatenation of two ($\epsilon,\delta$)--balanced matrices  $U$ and $v$, with dimensions ($n \times m$) and $V$ ($n \times r$) respectively, is an $[n \times (m+r)]$ ($\epsilon,\delta$)--balanced matrix if and only if $d(U) = d(V)$, and $deg(U) = deg(V)$.
    \end{proposition}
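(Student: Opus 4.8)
The plan is to prove both directions of the biconditional by tracking each of the three defining conditions of an $(\epsilon,\delta)$--balanced matrix---the regularity bound \cref{eq:regularityDensity}, the edge--density bound \cref{deltaCondOne}, and the minimum--degree bound \cref{deltaCondTwo}---through the column--wise concatenation. Write $W = \mathit{ccat}(U,V)$ for the $n \times (m+r)$ result, let $A$ be its shared row set with $|A| = n$, and let $B = B_U \cup B_V$ be its column set, where $B_U$ (with $|B_U| = m$) comes from $U$ and $B_V$ (with $|B_V| = r$) comes from $V$. The linchpin computation is that $d(U) = d(V) = d$ forces $d(W) = d$: since $e(A,B_U) = d\,nm$ and $e(A,B_V) = d\,nr$, we get $d(W) = (d\,nm + d\,nr)/[n(m+r)] = d$. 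This identifies the global density that every large subset of $W$ must track.

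First I would prove the ``if'' direction. Given a candidate pair $(X,Y)$ with $|X| > \epsilon n$ and $|Y| > \epsilon(m+r)$, split $Y = Y_U \cup Y_V$ along the two blocks. Because edges and cell counts are additive across the blocks, $d(X,Y)$ is exactly the convex combination $[\,d(X,Y_U)|Y_U| + d(X,Y_V)|Y_V|\,]/(|Y_U| + |Y_V|)$; whenever a block contributes a piece that is itself above the $\epsilon$ threshold for that block, the $(\epsilon,\delta)$--balance of $U$ or $V$ (together with $d(U) = d(V) = d = d(W)$) keeps that piece within $\epsilon$ of $d(W)$, so the convex combination satisfies \cref{eq:regularityDensity}. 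The edge--density condition \cref{deltaCondOne} then follows by additivity, $e(X,Y) = e(X,Y_U) + e(X,Y_V) > \delta|X||Y_U| + \delta|X||Y_V| = \delta|X||Y|$, and the degree condition \cref{deltaCondTwo} follows because each column of $W$ keeps its degree from its block (so $\deg(b) > \delta n$ is inherited) while each row accumulates $\deg_U(a) + \deg_V(a) > \delta m + \delta r = \delta(m+r)$; here $\deg(U) = \deg(V)$ guarantees this accumulation is uniform across rows, so a single $\delta$ governs all of $W$.

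For the ``only if'' direction I would argue by contraposition: if $d(U) \neq d(V)$ then $d(W)$ lies strictly between them, and a subset $Y$ concentrated in the denser block (taken large enough to clear the $\epsilon(m+r)$ threshold) has density essentially equal to that block's density, violating \cref{eq:regularityDensity} for $W$; an analogous witness shows $\deg(U) = \deg(V)$ is necessary for \cref{deltaCondTwo} to hold with one uniform $\delta$. The main obstacle, and the step demanding the most care, is exactly the mismatch of thresholds noted above: a set $Y$ that is large relative to all $m+r$ columns of $W$ can still fall below the per--block threshold $\epsilon m$ or $\epsilon r$, so the $(\epsilon,\delta)$--balance of that block cannot be invoked directly. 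I would dispatch this by bounding the weight that an under--threshold block can carry in the convex combination and by leaning on the shared density and degree hypotheses to control its contribution, and I expect this case analysis---rather than the additivity bookkeeping---to be where the equal--density and equal--degree hypotheses genuinely earn their place in the statement.
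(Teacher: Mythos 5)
Your decomposition is genuinely different from the paper's proof, and in the ``if'' direction it is more careful: the paper only ever tests subset pairs lying entirely inside one block ($X_u \subset A_u$, $Y_u \subset B_u$, and likewise for $V$), never a $Y$ straddling both column blocks, and it computes the concatenated density as the unweighted average $d(T) = \frac{1}{2}\left(d(U)+d(V)\right)$, which is only valid when $m = r$; your weighted computation $d(W) = \left(m\,d(U) + r\,d(V)\right)/(m+r)$ is the correct general form. Likewise your row-degree accumulation $\deg_U(a)+\deg_V(a) > \delta m + \delta r = \delta(m+r)$ is a cleaner derivation of \cref{deltaCondTwo} than the paper's $\delta_t = \min(\delta_u,\delta_v)$ bookkeeping --- and note that it already holds without invoking $\deg(U) = \deg(V)$ at all, which should make you suspicious of how that hypothesis ``earns its place'' in the biconditional.

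However, the step you deferred --- the under-threshold block --- is a genuine gap, and it does not close the way you sketch. If $|Y| > \epsilon(m+r)$ but $|Y_V| \le \epsilon r$, the weight of the $V$-block in your convex combination is $|Y_V|/|Y|$, which can approach $r/(m+r)$ (up to $1/2$ when $m = r$); this is $\Theta(1)$, not $O(\epsilon)$. Meanwhile nothing in the hypotheses controls $|d(X,Y_V) - d|$ from above: \cref{deltaCondOne} and \cref{deltaCondTwo} are lower bounds on edge counts, \cref{deltaCondOne} applies only above the per-block threshold, and equal global densities say nothing about a small column set. So the best available bound is $|d(X,Y)-d| < \epsilon + \frac{r}{m+r}\cdot O(1)$, i.e., a degraded $\epsilon'$ rather than the same $\epsilon$; ``bounding the weight'' cannot rescue the statement with a fixed $\epsilon$, and the paper avoids the issue only by silently never considering straddling subsets. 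Your ``only if'' contraposition has a parallel problem: taking $X = A$ and $Y = B_V$ witnesses a deviation of $\frac{m}{m+r}\left|d(U)-d(V)\right|$, which violates \cref{eq:regularityDensity} only when the density gap is large enough relative to $\epsilon$, so $d(U) = d(V)$ exactly does not follow; and since the row-degree condition holds automatically, no witness forces $\deg(U) = \deg(V)$ either. In short, you correctly located the crux that the paper's own proof skips, but that crux is fatal to the strict biconditional rather than a case to be dispatched, and your plan (like the paper's) proves at most the within-block and above-threshold portions of the claim.
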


    \begin{proof}
    Let $T = \textit{ccat}(U, V)$.
	Using \cref{eq:dQ}, let $d(A_u, B_u) = d(U)$, $d(A_v, B_v) = D(V)$, and $d(A_t, B_t) = D(T)$. Further, let $X_u \subset A_u$, $Y_u \subset B_u$, $X_v \subset A_v$, $Y_v \subset B_v$,  for all subsets that independently satisfy $|X| > \epsilon|A|$ and $|Y| > \epsilon|B|$. Since
 \begin{equation}
  d(T) = \frac{\mathbbm{1}^T  T\mathbbm{1}}{n(m+r)} = \frac{1}{2}\left(d(U) + d(V)\right),
\end{equation}
and 
\begin{align}
  \epsilon_u & > | d(X_u, Y_u) - d(U)|,\\
 \epsilon_v & > | d(X_v, Y_v) - d(V) |, 
\end{align}
we have
\begin{equation}
  \epsilon_t \ge \max(\epsilon_u, \epsilon_v) > \max(|d(X_u, Y_u) - d(T)|, | d(X_v, Y_v) - d(T) |) .
\end{equation}
This shows that the matrix resulting from the column--wise concatenation of two ($\epsilon,\delta$)--balanced matrices will only satisfy  \cref{eq:regularityDensity} if they retain the same $\epsilon$, which requires $d(T) = d(U) = d(V)$.
With respect to \cref{deltaCondOne,deltaCondTwo}, if $ \min(deg(A_u)) \ne \min(deg(A_v)))$ or $ \min(deg(B_u)) \ne \min(deg(B_v)))$, then $\delta_t$ is chosen by \cref{eq:deltaT},

\begin{align}
\delta_{t,B} |B_t| & = \delta_t |B_u + B_v| = \min(\delta_u, \delta_v) |B_u + B_v|, \\
\delta_{t,A} |A_t| & = \min(\delta_u, \delta_v) |A_t|, \\
\delta_t  & = \min(\delta_{t,B}, \delta_{t,A}).
\label{eq:deltaT}
\end{align}
This new $\delta_t$ satisfies \cref{deltaCondOne,deltaCondTwo} for $T$, and $\delta_t = \delta_u = \delta_v$ if $U$ and $V$ have the same \textit{ratio} of $deg(A):deg(B)$ (in--degree to out--degree).

        

    \end{proof}

\subsection{X--Nets and ($\epsilon,\delta$)--Super--Regularity of Expander Graphs}

X--Nets are sets of stacked, randomly created bipartite expander graphs. 
A bipartite graph, $G$ such that part $a \in A$ has $H$ neighbors in $B$, and spectral gap $\gamma \le 1 - \frac{|\lambda_2|}{H}$ is said to be an expander graph \cite{prabhu2017deep}.

Using some notation from \cite{komlos1997blow}, assume each layer of a deep expander network (X--Net) is the product of ``blowing--up'' a graph with the following structure, $G =  L_1 \rightarrow L_2 \rightarrow \cdots L_{r}$, where each vertex set, $L_i$ represents a layer of the network (from 1 to $r$).
Assume $|L_i| = n$ $\forall i \in 1..r$, and $V = \cup L_i$.
Further, assume that edges are uniformly randomly assigned between all successive pairs of sets (layers) such that each pair fulfills the below requirements: 

    \begin{enumerate}[$R_1$:]
        \item $|S| > \epsilon|L_1|$ and $|N(S)| > \epsilon|L_2|$
        \item $|d(S, N(S)) - d(L_1, L_2)| < \epsilon$
        \item
            $deg(l_1) \ge \delta|L_2|$ $\forall l_1 \in L_1$, and
            $deg(l_2) \ge \delta|L_1|$ $\forall l_2 \in L_2$
    \end{enumerate}

 The layers of \textit{some} X--Nets may be modeled via $(\epsilon,\delta)$--super--regular pairs (precise conditions will be discussed later), and the ones that cannot are \textit{too} sparse, and $R_2$ cannot be satisfied. If $R_1$ and $R_2$ are both satisfied, the pair is $\epsilon$--regular. If $R_3$ is also fulfilled, then the pair is $(\epsilon,\delta)$--super--regular. If all successive $L_i$ pairs satisfy all three above requirements, there exists an embedding of the X--Net into some super--regular network, $P$. This is a result of Theorem 1 from \cite{komlos1997blow}, which states that if a graph may be embedded into the fully--connected ``blown--up'' structure of some graph, F, if can also be embedded into a sparse version of F where the edges have been replaced such that each pair of vertex sets satisfies the conditions for super--regularity.

    In order to use $(\epsilon, \delta)$--nets (SRNs) instead of X--Nets, $(\epsilon, \delta)$ ranges must be found that bound the number of layers required to guarantee every output is sensitive to every input.

    This can be done by choosing parameters that ensure $P$ meets the criteria for an expander network.
   $R_1$ states that $\epsilon < \frac{|S|}{n}$.  $R_2$ states the density of any subset (greater than some size), must not differ from the density of the two layers in question by more than $\epsilon$.  The minimum density between $S$ and $N(S)$, $d(S, N(S))$ is
 \begin{equation}
    d = \frac{D_{min}|S|} {|S||N(S)|},
\end{equation}
where $D_{min}$ is the minimum degree of any vertex in $L_1$ or $L_2$ (multiplying this by the size of $S$ gives the minimum number of edges between $S$ and $N(S)$). The minimum density between $L_1$ and $L_2$ is:
$$ d = \frac{D_{min}(n)}{n^2} .$$
Then, $R_2$ becomes:
\begin{equation}
D_{min}\left|\frac{1} {|N(S)|} - \frac{1} {n}\right| < \epsilon < \frac{|S|} {n} < \frac{1} {2}
 \label{eq:simplified_r2}
\end{equation}
If we set $|S| = 1$, the condition remains satisfied for a single starting vertex (input). 


\subsection{Advantages of SRNs over X-Nets}
\label{ssec:advantages}
    First, SRNs put lower bounds on X--Net sparsity.
    \Cref{eq:simplified_r2} describes the relationship between the minimum size of $S$ (in $L_1$) and its the neighborhood in $L_2$, $N(S)$, the minimum degree of the graph, size of each partition, and $\epsilon$.
    Effectively, it says that when expanders become too sparse, they no longer satisfy the conditions of super-regularity.
    Along with $R_3$, this shows that \textit{all} bipartite expander graphs greater than a given density may be expressed as $(\epsilon,\delta)$-super-regular pairs. 

    Because we must be able to satisfy this condition with a $|S| = 1$, and $D_{min} = D$ for a $D-regular$ expander,

	\begin{equation}
	\left|\frac{D} {|N(S)|} - \frac{D} {n}\right| < \frac{1} {n},
	\end{equation}
which is equivalent to

\begin{equation} 
\left| d(S,N(S)) - \frac{D} {n} \right| \le \epsilon \le \frac{(1 - \gamma)\sqrt{|S||N(S)|}} {|S||N(S)|}.
\label{eq:expander_density_guarantee}
\end{equation}

This means that expander graphs force $\epsilon \le \frac{(1 - \gamma)\sqrt{|S||N(S)|}} {|S||N(S)|}$, while $(\epsilon,\delta)$-pairs further restrict the density differential between vertex set pairs $S$, and $N(S)$ to $\epsilon < \frac{1} {n}$, while maintaining the expansion property. 
Second, SRNs guarantee minimum degree, and therefore connectivity.
$R_3$ above enables us to guarantee that \textit{no vertex} will have a degree less than some predefined constant, $\delta n$.
Because the layers are constructed as pairs of bipartite graphs we can be sure that every vertex of every layer has a \textit{minimum} of $\delta(n)$ in-- and out--edges.
This is a powerful property to be able to both guarantee and systematically modulate at will, that X--Nets cannot offer. 
X--Nets cannot offer this because there is only the guarantee that from layer $n$ to $n+1$, every node will have $D$ edges, however the only guarantee made about minimum vertex degree from $n+1$ to $n$ is the minimum density guarantee in \cref{eq:expander_density_guarantee}.
This means that no guarantee can be made about the specific connectivity of a particular node in a particular layer with X--Nets, however $(\epsilon,\delta)$--nets \textit{do} offer a connectivity guarantee.

\section{Deterministic Construction of Super--Regular Networks}
\label{sec:detcon}

    In order to deterministically construct super--regular networks, we create layers of $(\epsilon,\delta)$--balanced matrices. To do this, we define an algebra using matrices. Full multiples of the base of the matrix system are denoted $\mathbb{A}_n, n \ge 0$; similar to the ones, tens, and hundreds positions in the decimal number system. Partial multiples of the system are denoted $\mathbb{A}_{n,s}$, where the set $s = \{1, 2, 3, 4\}$ and each digit respectively specifies the first, second, third, and fourth \textit{full diagonals} in the associated matrix. A \textit{full diagonal} is the same length as the main diagonal of a matrix ($m$, for an $m \times m$ square matrix), but may not start at $(0,0)$, and may have one or more breaks. However, it must always assign exactly one edge to each pair of vertices. Further, $\mathbb{A}_n = \mathbb{A}_{n,s=[1..4]}$.  
The symbol $\mathcal{A}$ denotes a matrix composed of one or more $\mathbb{A}_{n,s}$ matrices. 


%
%
%
%
\definecolor{pixel0}{HTML}{FFFFFF}
\definecolor{pixel1}{HTML}{000000}
\definecolor{pixel2}{HTML}{0000FF}
\definecolor{pixel3}{HTML}{800080}
\definecolor{newpurple}{HTML}{800080}
\definecolor{pixel4}{HTML}{008000}
\definecolor{newgreen}{HTML}{008000}
\definecolor{pixel5}{HTML}{FF9300}
\definecolor{neworange}{HTML}{FF9300}

\begin{center}
\begin{figure}
\begin{center}
\def\pixelsBoxTwoOneTwo{
  {1,0,0,2},
  {2,1,0,0},
  {0,2,1,0},
  {0,0,2,1}%
}
\begin{tikzpicture}
\node at (-1/2,-1/2) {$\mathbb{A}_{2_{\{1,2\}}}$:};
  \foreach \line [count=\y] in \pixelsBoxTwoOneTwo {
    \foreach \pix [count=\x] in \line {
        \draw[fill=pixel\pix] (\x/3,-\y/3) rectangle +(1/3,1/3);
    }
  }
\end{tikzpicture}
\def\pixelsBoxTwoOneThree{
  {1,0,3,0},
  {0,1,0,3},
  {3,0,1,0},
  {0,3,0,1}%
}
\hspace{.5in}
\begin{tikzpicture}
    \node at (-1/2,-1/2) {$\mathbb{A}_{2_{\{1,3\}}}$:};
  \foreach \line [count=\y] in \pixelsBoxTwoOneThree {
    \foreach \pix [count=\x] in \line {
        \draw[fill=pixel\pix] (\x/3,-\y/3) rectangle +(1/3,1/3);
    }
  }
\end{tikzpicture}
\end{center}
\caption{Significance of edge addition pattern.}
\label{fig:additionPattern}
\end{figure}
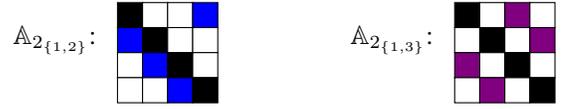
\end{center}

\begin{center}
\begin{figure}
\begin{center}
\def\pixelsBoxedZero{
   {1}%
}
\begin{tikzpicture}
  \node at (-1/8,0) {$\mathbb{A}_0$:};
  \foreach \line [count=\y] in \pixelsBoxedZero {
    \foreach \pix [count=\x] in \line {
        \draw[fill=pixel\pix, draw=white] (\x/5,-\y/5) rectangle +(1/5,1/5);
    }
  }
\end{tikzpicture}
\def\pixelsBoxedOne{
   {1,2}, 
   {2,1}
}
\begin{tikzpicture}
  \node at (-1/8,0) {$\mathbb{A}_1$:};
  \foreach \line [count=\y] in \pixelsBoxedOne {
    \foreach \pix [count=\x] in \line {
        \draw[fill=pixel\pix, draw=white] (\x/5,-\y/5) rectangle +(1/5,1/5);
    }
  }
\end{tikzpicture}
\def \pixelsBoxedTwo{
  {1,4,3,2},
  {2,1,4,3},
  {3,2,1,4},
  {4,3,2,1}%
}
\begin{tikzpicture}
  \node at (-1/8,0) {$\mathbb{A}_2$:};
  \foreach \line [count=\y] in \pixelsBoxedTwo {
    \foreach \pix [count=\x] in \line {
        \draw[fill=pixel\pix, draw=white] (\x/5,-\y/5) rectangle +(1/5,1/5);
    }
  }
\end{tikzpicture}
\def \pixelsBoxedThree{
    {1,0,4,0,3,0,2,0},
    {0,1,0,4,0,3,0,2},
    {2,0,1,0,4,0,3,0},
    {0,2,0,1,0,4,0,3},
    {3,0,2,0,1,0,4,0},
    {0,3,0,2,0,1,0,4},
    {4,0,3,0,2,0,1,0},
    {0,4,0,3,0,2,0,1}%
}
\begin{tikzpicture}
  \node at (-1/8,0) {$\mathbb{A}_3$:};
  \foreach \line [count=\y] in \pixelsBoxedThree {
    \foreach \pix [count=\x] in \line {
        \draw[fill=pixel\pix] (\x/5,-\y/5) rectangle +(1/5,1/5);
    }
  }
\end{tikzpicture}
\def \pixelsBoxedFour{
    {1,0,0,0,4,0,0,0,3,0,0,0,2,0,0,0},
    {0,1,0,0,0,4,0,0,0,3,0,0,0,2,0,0},
    {0,0,1,0,0,0,4,0,0,0,3,0,0,0,2,0},
    {0,0,0,1,0,0,0,4,0,0,0,3,0,0,0,2},
    {2,0,0,0,1,0,0,0,4,0,0,0,3,0,0,0},
    {0,2,0,0,0,1,0,0,0,4,0,0,0,3,0,0},
    {0,0,2,0,0,0,1,0,0,0,4,0,0,0,3,0},
    {0,0,0,2,0,0,0,1,0,0,0,4,0,0,0,3},
    {3,0,0,0,2,0,0,0,1,0,0,0,4,0,0,0},
    {0,3,0,0,0,2,0,0,0,1,0,0,0,4,0,0},
    {0,0,3,0,0,0,2,0,0,0,1,0,0,0,4,0},
    {0,0,0,3,0,0,0,2,0,0,0,1,0,0,0,4},
    {4,0,0,0,3,0,0,0,2,0,0,0,1,0,0,0},
    {0,4,0,0,0,3,0,0,0,2,0,0,0,1,0,0},
    {0,0,4,0,0,0,3,0,0,0,2,0,0,0,1,0},
    {0,0,0,4,0,0,0,3,0,0,0,2,0,0,0,1}%
}
\begin{tikzpicture}
  \node at (-1/8,0) {$\mathbb{A}_4$:};
  \foreach \line [count=\y] in \pixelsBoxedFour {
    \foreach \pix [count=\x] in \line {
        \draw[fill=pixel\pix] (\x/5,-\y/5) rectangle +(1/5,1/5);
    }
  }
\end{tikzpicture}
\end{center}
\caption{The construction of the first four base matrices. Edges are added in sets of $m$ during the initial construction (this can be modulated in the final stages to achieve a desired target density, $d_t$). The order of edge construction is: \textbf{first (black)}, \textbf{\color{blue}second (blue)}, \textbf{\color{newpurple}third (purple)}, \textbf{\color{newgreen}fourth (green)}. A white box means there is no edge present.}
\label{firstFive}
\end{figure}
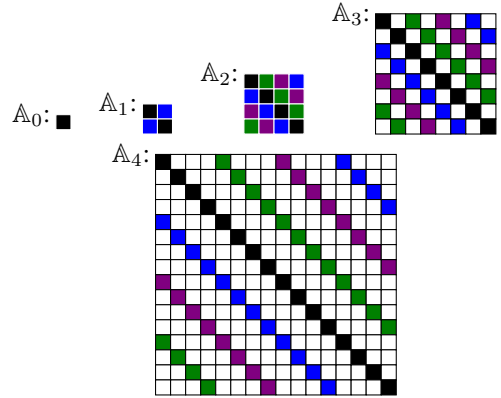
\end{center}

The fundamental unit of the matrix system is the $1 \times 1$ matrix defined by $\mathbb{A}_0$. The first compositional unit is $\mathbb{A}_1$, and may be constructed using between one and four $\mathbb{A}_0$ units. The second compositional unit is $\mathbb{A}_2$, and so on. $\mathbb{A}_2$ may be composed of between one and four $\mathbb{A}_1$ units, or between one and eight $\mathbb{A}_0$ units. In fact, every $\mathbb{A}_k$ matrix is composed of some power of two multiple of $\mathbb{A}_0$. The pattern that edges are added to subgraphs is an essential aspect to this system, and allows maintaining an ($\epsilon,\delta$)--balanced matrix. Exactly $m$ edges are added at a time, with the following ordering for $\mathbb{A}_k, k \ge 2$ ($\mathbb{A}_0$ and $\mathbb{A}_1$ follow a similar but shorter approach). \Cref{eq:AgenFunc} shows the pattern used to generate a particular size matrix:

\begin{equation}
A_{n,s} = 
\begin{cases}
	\begin{aligned}
      		\mathbb{I}_0 		&	& 	\text{if}	&	  & 1 \in s &		& \text{else} &		& 0_{n,n}  &	 & +\\
           	\mathbb{I}_{n/4} 	&	& 	\text{if}	&	  & 2 \in s &		& \text{else} &		& 0_{n,n}  &	 & +\\
                	\mathbb{I}_{n/2} 	&	& 	\text{if}	&	  & 3 \in s &		& \text{else} &		& 0_{n,n}  &	 & +\\
            	\mathbb{I}_{3n/4} 	&	& 	\text{if}	&	  & 4 \in s &		& \text{else} &		& 0_{n,n}  &	 &  \\
	\end{aligned}
   \end{cases},
   \label{eq:AgenFunc}
\end{equation}

where the $\mathbb{I}$ subscript indicates the starting row in the first column to begin the diagonal, which wraps around at the top of the matrix if necessary to become a full diagonal.

\Cref{fig:additionPattern} shows why this pattern is necessary. Essentially, it minimizes the probability of selecting an empty set for a given set size. For example, filling only the first two diagonals of $\mathbb{A}_2$ produces a bipartite graph with $d=.5$, and guarantees at least one edge in a selected subset of size 2 on each side.
However, a matrix of the same size, but with the first and third diagonals filled instead, permits the selection of an empty set (if the odd rows and even columns are selected, or vice versa).
Both matrices examined in this case are balanced, though the former has a tighter $\epsilon$--balance than the latter.
This property scales with the size of the submatrix, assuming the sizes of the selected sets scale as well. By building matrices according to the aforementioned diagonal ordering, we can deterministically create ($\epsilon$--$\delta$)--balanced matrices. \Cref{firstFive} shows the construction of the first five submatrices, along with the pattern used to add edges.

\subsection{Addition}

Any $\mathbb{A}_{(k-q)}, \forall 0 \le q \le k$ may be \textit{added} to $\mathbb{A}_k$, from 0 to $4^{(k-q)}$ times.
Each $\mathbb{A}_k$ implicilty describes how any $\mathbb{A}_{(k-q)}$ may be added to it.
Adding two matrices requres a bijection, $\beta: \mathbb{A}_r \mapsto \mathbb{A}_k$, where $r = (k - q)$, such that each index in $\mathbb{A}_r$ has exactly one distinct corresponding submatrix of size $2^{(k-q)}$ in $\mathbb{A}_k$, for all submatrices of size $2^{(k-q)}$ in $\mathbb{A}_k$ that contain a ``true'' diagonal.
Then, compute $\mathcal{A} = \beta(\mathbb{A}_q) \cup \mathbb{A}_k$, which copies $\mathbb{A}_q$ into every previously non--empty submatrix.  

For example, to add $\mathbb{A}_{4_{1,2}} + \mathbb{A}_{3_{1,2}},$ define $\beta: \mathbb{A}_1 \mapsto \mathbb{A}_{4_{1,2}}$, and $\mathcal{A} = \beta(\mathbb{A}_{3_{1,2}}) \cup \mathbb{A}_{4_{1,2}}$. This operation is illustrated in \cref{simpleAddition}:

\begin{center}
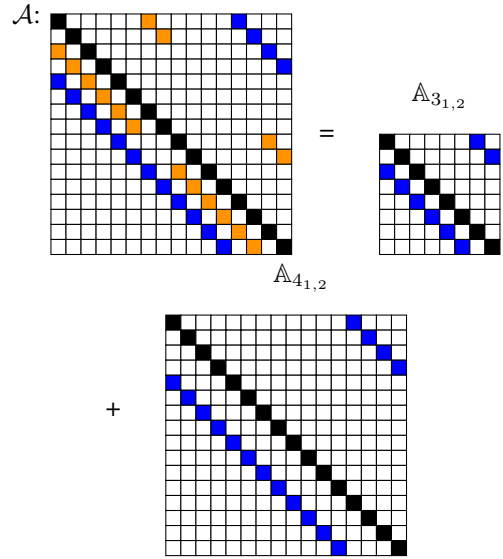
\begin{figure}
\begin{center}
\def \pixelsBoxedFourSimpleAddition{
    {1,0,0,0,0,0,5,0,0,0,0,0,2,0,0,0},
    {0,1,0,0,0,0,0,5,0,0,0,0,0,2,0,0},
    {5,0,1,0,0,0,0,0,0,0,0,0,0,0,2,0},
    {0,5,0,1,0,0,0,0,0,0,0,0,0,0,0,2},
    {2,0,5,0,1,0,0,0,0,0,0,0,0,0,0,0},
    {0,2,0,5,0,1,0,0,0,0,0,0,0,0,0,0},
    {0,0,2,0,5,0,1,0,0,0,0,0,0,0,0,0},
    {0,0,0,2,0,5,0,1,0,0,0,0,0,0,0,0},
    {0,0,0,0,2,0,0,0,1,0,0,0,0,0,5,0},
    {0,0,0,0,0,2,0,0,0,1,0,0,0,0,0,5},
    {0,0,0,0,0,0,2,0,5,0,1,0,0,0,0,0},
    {0,0,0,0,0,0,0,2,0,5,0,1,0,0,0,0},
    {0,0,0,0,0,0,0,0,2,0,5,0,1,0,0,0},
    {0,0,0,0,0,0,0,0,0,2,0,5,0,1,0,0},
    {0,0,0,0,0,0,0,0,0,0,2,0,5,0,1,0},
    {0,0,0,0,0,0,0,0,0,0,0,2,0,5,0,1}%
}
\begin{tikzpicture}
    \node at (-1/8,0) {$\mathcal{A}$:};
  \foreach \line [count=\y] in \pixelsBoxedFourSimpleAddition {
    \foreach \pix [count=\x] in \line {
        \draw[fill=pixel\pix] (\x/5,-\y/5) rectangle +(1/5,1/5);
    }
  }
\end{tikzpicture}
\def \pixelsBoxedThreeOneTwo{
    {1,0,0,0,0,0,2,0},
    {0,1,0,0,0,0,0,2},
    {2,0,1,0,0,0,0,0},
    {0,2,0,1,0,0,0,0},
    {0,0,2,0,1,0,0,0},
    {0,0,0,2,0,1,0,0},
    {0,0,0,0,2,0,1,0},
    {0,0,0,0,0,2,0,1}%
}
\begin{tikzpicture}
    \node at (1,.5) {$\mathbb{A}_{3_{1,2}}$};
    \node at (-1/2,0) {=};
  \foreach \line [count=\y] in \pixelsBoxedThreeOneTwo {
    \foreach \pix [count=\x] in \line {
        \draw[fill=pixel\pix] (\x/5,-\y/5) rectangle +(1/5,1/5);
    }
  }
\end{tikzpicture}
\def \pixelsBoxedFourOneTwo{
    {1,0,0,0,0,0,0,0,0,0,0,0,2,0,0,0},
    {0,1,0,0,0,0,0,0,0,0,0,0,0,2,0,0},
    {0,0,1,0,0,0,0,0,0,0,0,0,0,0,2,0},
    {0,0,0,1,0,0,0,0,0,0,0,0,0,0,0,2},
    {2,0,0,0,1,0,0,0,0,0,0,0,0,0,0,0},
    {0,2,0,0,0,1,0,0,0,0,0,0,0,0,0,0},
    {0,0,2,0,0,0,1,0,0,0,0,0,0,0,0,0},
    {0,0,0,2,0,0,0,1,0,0,0,0,0,0,0,0},
    {0,0,0,0,2,0,0,0,1,0,0,0,0,0,0,0},
    {0,0,0,0,0,2,0,0,0,1,0,0,0,0,0,0},
    {0,0,0,0,0,0,2,0,0,0,1,0,0,0,0,0},
    {0,0,0,0,0,0,0,2,0,0,0,1,0,0,0,0},
    {0,0,0,0,0,0,0,0,2,0,0,0,1,0,0,0},
    {0,0,0,0,0,0,0,0,0,2,0,0,0,1,0,0},
    {0,0,0,0,0,0,0,0,0,0,2,0,0,0,1,0},
    {0,0,0,0,0,0,0,0,0,0,0,2,0,0,0,1}%
}
\begin{tikzpicture}
    \node at (2,.5) {$\mathbb{A}_{4_{1,2}}$};
    \node at (-1/2,-1.25) {+};
  \foreach \line [count=\y] in \pixelsBoxedFourOneTwo {
    \foreach \pix [count=\x] in \line {
        \draw[fill=pixel\pix] (\x/5,-\y/5) rectangle +(1/5,1/5);
    }
  }
\end{tikzpicture}
\end{center}
\caption{Simple addition via a submatrix bijection.}
\label{simpleAddition}
\end{figure}
\end{center}

Essentially, we create a grid of size $|\mathbb{A}_q|$, and add copies of $\mathbb{A}_r$ in the locations that $\mathbb{A}_k$ has a ``true'' diagonal already in each respective submatrix. Submatrices, similarly to individual edges, are added to the primary matrix according to the edge ordering constraint (shown in \cref{firstFive}). 
In fact, $\mathbb{A}_0$ is both a submatrix and an edge, and could be added to any $\mathbb{A}_k, \forall k$ in exactly the manner described above. If $\mathbb{A}_k$ has 4 (denoted $\mathbb{A}_{k_4}$) \textit{full diagonals} (diagonals of total length $m$) worth of edges, adding $\mathbb{A}_{{(k-1)}_4}$ to it four times will increase the density by a factor of $4m$. This is because two of the diagonals that $\mathbb{A}_{{(k-1)}_4}$ would add, have already been added by the completion of $\mathbb{A}_{k_4}$. As a result, we have the ability to modulate $A$'s density by $\pm \frac{m}{m^2}$.

Using this process, we are able to deterministically build a number of bipartite ($\epsilon,\delta$)--super--regular pairs.
However, in the context of multiple stacked layers of super--regular pairs, this edge assignment process structurally limits information mixing from one layer to the next.
In order to mitigate this, we randomly permute the node ordering after edge assignment. 
This ensures each ($\epsilon,\delta$)--super--regular pair continues to satisfy all required properties, while also ensuring uniform mixing as discussed by \cite{prabhu2017deep}.

\section{Experiments}
\label{sec:experiments}

In order to demonstrate that our construction of ($\epsilon,\delta$)--super--regular pairs, and thereby SRNs is practical, we compared the performance of SRNs and X--Nets using four different CNN architectures.
We modified the codebase used by \cite{prabhu2017deep} (implemented with PyTorch), and ran experiments using the CIFAR-10 dataset.
The architectures tested were VGG--16 with batch normalization, DenseNet ($k=8$), MobileNet, and ResNet50. 
The original codebase used sparse bipartite linear and 2D convolutional expander graph layers with randomly assigned edges in place of many fully--connected layers (but not all).
Our modifications replaced all bipartite expander graph layers with $(\epsilon,\delta)$--super--regular layers of the same density, with deterministically assigned edges whose nodes where then randomly permuted in order to guarantee uniform mixing. 
Each architecture was tested ten different times using $(\epsilon,\delta)$--super--regular layers and expander layers.
We used a batch size of 128 with stochastic gradient descent.

\section{Results}
\label{sec:results}

\Cref{table:diff} shows the average best top-1 precision scores over all ten trials after 100 epochs for each architecture we tested.
The rightmost column shows the absolute value of the difference between the X--Net and SRN implementations.
In terms of the averages, the X--Net implementation slightly outperformed the SRN. 

\Cref{fig:loss,fig:prec} show the training loss and validation set top--1 precision scores for each of the 10 trials of each network type for both the X--Net and SRN implementations, for each of the 100 epochs. 
The green lines display SRN trials, while the black lines show the X--Net trials.

\setlength{\tabcolsep}{4pt}
\begin{table}
\begin{center}
\caption{Average best top--1 precision scores for each architecture after training 100 epochs on the CIFAR--10 (batch size = 128)}
\label{table:diff}
\begin{tabular}{lllc}
\hline\noalign{\smallskip}
Architecture & SRN & X--Net & |SRN - X--Net| \\
\noalign{\smallskip}
\hline
\noalign{\smallskip}
VGG-16 BN  & 85.66\% & 86.17\% & 0.51\%  \\
MobileNet & 74.46\% & 74.87\% & 0.41\%  \\
DenseNet & 75.43\% & 75.74\% & 0.31\% \\
ResNet50 & 76.22 \%  & 76.78 \% & 0.56\%  \\
\hline
\end{tabular}
\end{center}
\end{table}
\setlength{\tabcolsep}{1.4pt}

\vspace{-.5in}

\begin{figure}
\centering
\subfloat[]{
	\includegraphics[width=.9\linewidth]{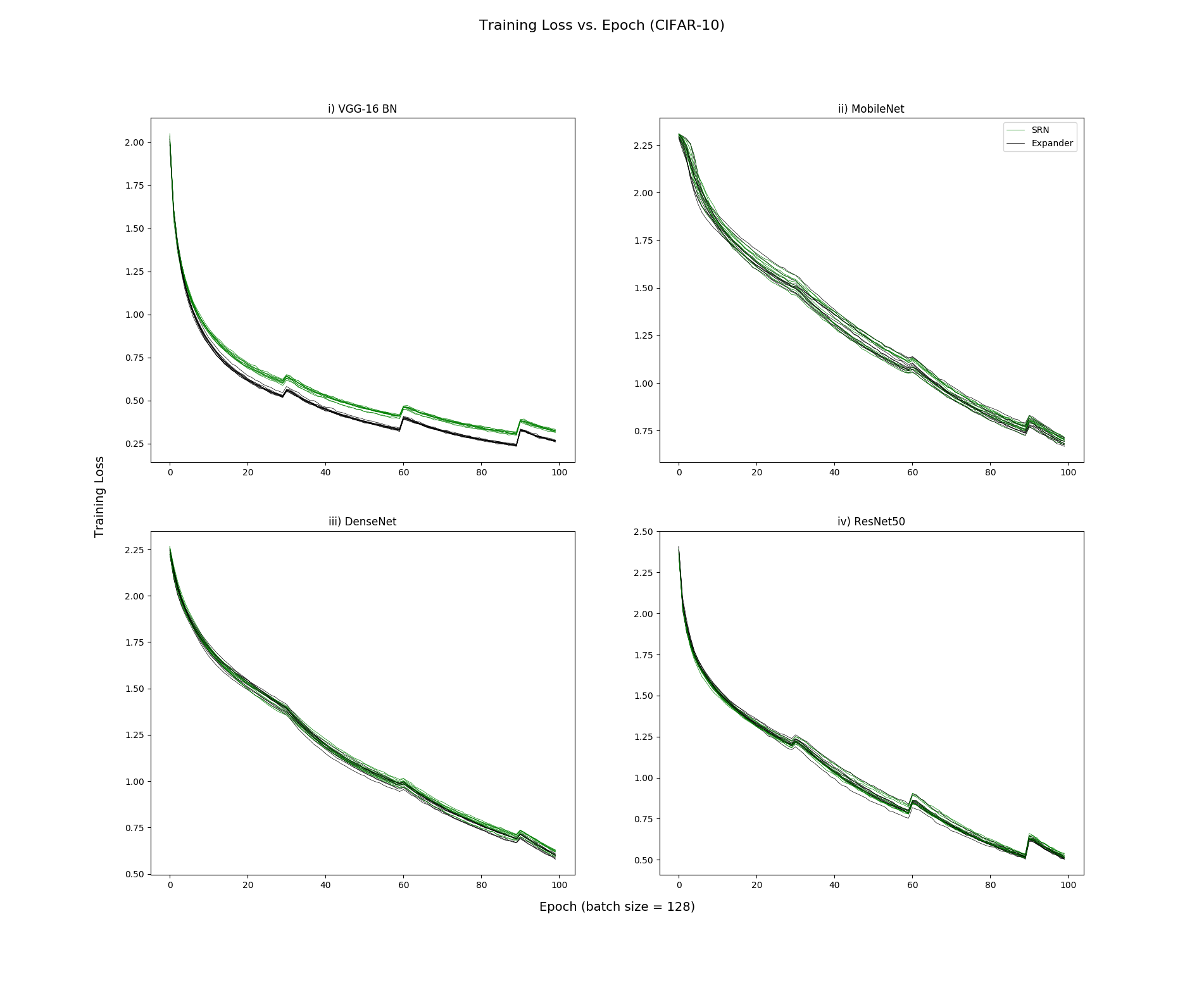}
	\label{fig:loss}
}

\subfloat[]{
	\includegraphics[width=.9\linewidth]{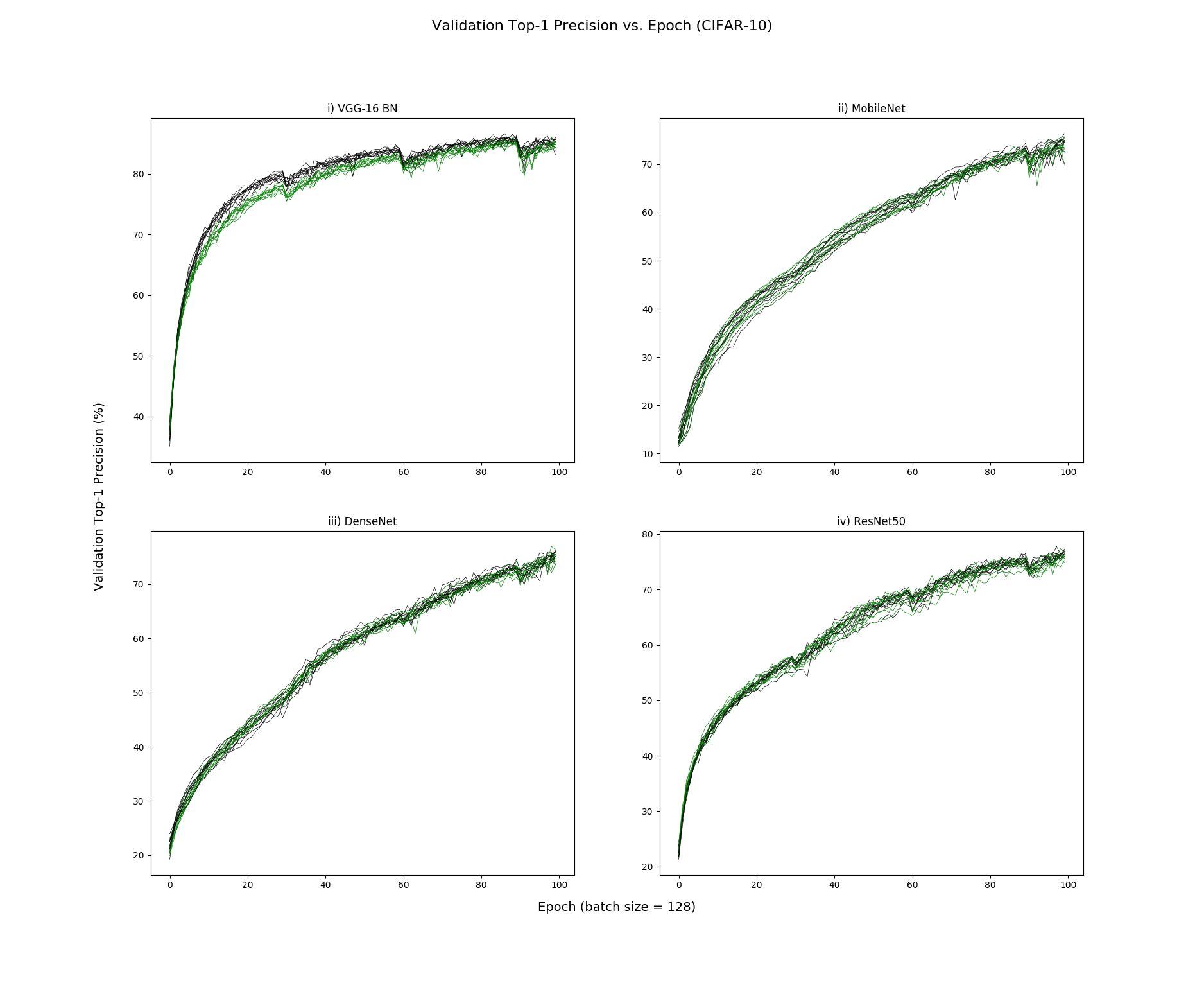}
	\label{fig:prec}
}
\caption{(a) Training loss vs. epoch for the four architectures tested. For each architecture, the green lines are each of the 10 trials using the SRN, while the black lines are each of the 10 trials using the X--Net. (b) Validation top--1 precision vs. epoch for the four architectures tested, showing 10 trials for each network. Architectures are (i) VGG-16 with batch normalization; (ii) MobileNet; (iii) DenseNet; and (iv) ResNet50}
\end{figure}

\vspace{.5in}

\section{Discussion}
\label{sec:discussion}

We expected to see SRNs outperform X--Nets specifically \textit{because} of the subgraph uniformity guaranteed by $(\epsilon,\delta)$--super--regular pairs. 
However, we found that SRNs consistently performed comparably (nearly as well as) X--Nets, as shown by \cref{fig:loss,fig:prec}. 
The VGG architecture is an interesting outlier, where the X--Net seemed to consistently outperform the SRN. 

As discussed in \cref{ssec:advantages}, SRNs put a lower bound on X--Net subgraph sparsity. 
A consequence of this is that the potential to have an uneven distribution of in--edges to a given node decreases.
This may be a primary driver of the slight observed decrease in overall performance of SRNs as compared to X--Nets.
Essentially, due to non--uniformities in the data, SRNs lose the advantage that X--Nets have of being able to relatively over-- and under--utilize certain nodes or paths. 
Minimal differences in the average best top--1 precision over the 100 training epochs across all architectures tested support this conclusion, as shown in \cref{table:diff}, as do the tightly clustered and interwoven loss and precision plots in \cref{fig:loss,fig:prec}.
In all cases, the densities of the ($\epsilon,\delta$)--super--regular layers that replaced the expander layers were identical; so it is likely that the discrepancy is a product of the differences in edge distribution.

However, there are two primary algorithmic advantages of SRNs over X--Nets.
First, $(\epsilon,\delta)$--super--regular pairs guarantee the ability to embed predetermined paths into a network, as a result of the connectivity.
Second, because the edges of SRNs are deterministically constructed, SRNs have the potential to be carefully augmented or tuned. 
As deep learning research continues to evolve, this characteristic has the potential to become increasingly important.

\section{Conclusions and future work}
\label{sec:conclusions}

We introduced $\epsilon$-- and ($\epsilon,\delta$)--balanced matrices, established a relationship  between $\epsilon$--balanced matrices and $\epsilon$--regular pairs, and between ($\epsilon,\delta$)--balanced matrices and ($\epsilon,\delta$)--super--regular pairs. We presented a method to construct pseudo--deterministic SRNs, and showed that SRNs produce comparable results to X--Nets. Further, we discussed the advantages that SRNs have over X--Nets. Specifically, SRNs promise greater network connectivity and uniformity. This means SRNs are inherently more tunable than X--Nets. Furthermore, due to their comparable performance to X--Nets and additional properties, our results suggest that SRNs are suitable replacements for FCNs. Further work is necessary to verify this. 

Future work requires us to understand why, despite identical layer densities, X--Nets seemed to slightly outperform SRNs in terms of training loss and validation precision.
Another direction to take future work is to extend the notion of transfer learning to embed multiple unrelated pre-trained sparse networks into a slightly larger SRN, and begin training for a more complex task using the newly embedded SRN as a starting point. 
Additionally, it would be helpful to experimentally determine the impact that increasing sparsity has on the performance differential between SRNs and X--Nets.
Finally, a fully deterministic construction of ($\epsilon,\delta$)--super--regular pairs may facilitate engineering specific network architectures. One possible way to approach this is via a deterministic node permutation.

\bibliographystyle{IEEEtran}
\bibliography{super_regular_nets}

\begin{thebibliography}{10}
\providecommand{\url}[1]{#1}
\csname url@samestyle\endcsname
\providecommand{\newblock}{\relax}
\providecommand{\bibinfo}[2]{#2}
\providecommand{\BIBentrySTDinterwordspacing}{\spaceskip=0pt\relax}
\providecommand{\BIBentryALTinterwordstretchfactor}{4}
\providecommand{\BIBentryALTinterwordspacing}{\spaceskip=\fontdimen2\font plus
\BIBentryALTinterwordstretchfactor\fontdimen3\font minus
  \fontdimen4\font\relax}
\providecommand{\BIBforeignlanguage}[2]{{%
\expandafter\ifx\csname l@#1\endcsname\relax
\typeout{** WARNING: IEEEtran.bst: No hyphenation pattern has been}%
\typeout{** loaded for the language `#1'. Using the pattern for}%
\typeout{** the default language instead.}%
\else
\language=\csname l@#1\endcsname
\fi
#2}}
\providecommand{\BIBdecl}{\relax}
\BIBdecl

\bibitem{thom2016}
\BIBentryALTinterwordspacing
M.~Thom and G.~Palm, ``Sparse activity and sparse connectivity in supervised
  learning,'' \emph{CoRR}, vol. abs/1603.08367, 2016. [Online]. Available:
  \url{http://arxiv.org/abs/1603.08367}
\BIBentrySTDinterwordspacing

\bibitem{esteva2019guide}
A.~Esteva, A.~Robicquet, B.~Ramsundar, V.~Kuleshov, M.~DePristo, K.~Chou,
  C.~Cui, G.~Corrado, S.~Thrun, and J.~Dean, ``A guide to deep learning in
  healthcare,'' \emph{Nature medicine}, vol.~25, no.~1, p.~24, 2019.

\bibitem{mayr2016deeptox}
A.~Mayr, G.~Klambauer, T.~Unterthiner, and S.~Hochreiter, ``Deeptox: toxicity
  prediction using deep learning,'' \emph{Frontiers in Environmental Science},
  vol.~3, p.~80, 2016.

\bibitem{lecun2015deep}
Y.~LeCun, Y.~Bengio, and G.~Hinton, ``Deep learning,'' \emph{nature}, vol. 521,
  no. 7553, p. 436, 2015.

\bibitem{srivastava14a}
\BIBentryALTinterwordspacing
N.~Srivastava, G.~Hinton, A.~Krizhevsky, I.~Sutskever, and R.~Salakhutdinov,
  ``Dropout: A simple way to prevent neural networks from overfitting,''
  \emph{Journal of Machine Learning Research}, vol.~15, pp. 1929--1958, 2014.
  [Online]. Available: \url{http://jmlr.org/papers/v15/srivastava14a.html}
\BIBentrySTDinterwordspacing

\bibitem{prabhu2017deep}
A.~Prabhu, G.~Varma, and A.~Namboodiri, ``Deep expander networks: Efficient
  deep networks from graph theory,'' \emph{arXiv preprint arXiv:1711.08757},
  2017.

\bibitem{komlos1997blow}
J.~Koml{\'o}s, G.~N. S{\'a}rk{\"o}zy, and E.~Szemer{\'e}di, ``Blow-up lemma,''
  \emph{Combinatorica}, vol.~17, no.~1, pp. 109--123, 1997.

\bibitem{molchanov2017variational}
D.~Molchanov, A.~Ashukha, and D.~Vetrov, ``Variational dropout sparsifies deep
  neural networks,'' in \emph{Proceedings of the 34th International Conference
  on Machine Learning-Volume 70}.\hskip 1em plus 0.5em minus 0.4em\relax JMLR.
  org, 2017, pp. 2498--2507.

\bibitem{guoSparseAdversarial}
\BIBentryALTinterwordspacing
Y.~Guo, C.~Zhang, C.~Zhang, and Y.~Chen, ``Sparse dnns with improved
  adversarial robustness,'' in \emph{Advances in Neural Information Processing
  Systems 31}, S.~Bengio, H.~Wallach, H.~Larochelle, K.~Grauman,
  N.~Cesa-Bianchi, and R.~Garnett, Eds.\hskip 1em plus 0.5em minus 0.4em\relax
  Curran Associates, Inc., 2018, pp. 242--251. [Online]. Available:
  \url{http://papers.nips.cc/paper/7308-sparse-dnns-with-improved-adversarial-robustness.pdf}
\BIBentrySTDinterwordspacing

\bibitem{moosavi2016deepfool}
S.-M. Moosavi-Dezfooli, A.~Fawzi, and P.~Frossard, ``Deepfool: a simple and
  accurate method to fool deep neural networks,'' in \emph{Proceedings of the
  IEEE conference on computer vision and pattern recognition}, 2016, pp.
  2574--2582.

\bibitem{wen2016lss}
\BIBentryALTinterwordspacing
W.~Wen, C.~Wu, Y.~Wang, Y.~Chen, and H.~Li, ``Learning structured sparsity in
  deep neural networks,'' in \emph{Advances in Neural Information Processing
  Systems 29}, D.~D. Lee, M.~Sugiyama, U.~V. Luxburg, I.~Guyon, and R.~Garnett,
  Eds.\hskip 1em plus 0.5em minus 0.4em\relax Curran Associates, Inc., 2016,
  pp. 2074--2082. [Online]. Available:
  \url{http://papers.nips.cc/paper/6504-learning-structured-sparsity-in-deep-neural-networks.pdf}
\BIBentrySTDinterwordspacing

\bibitem{tartaglione2018}
\BIBentryALTinterwordspacing
E.~Tartaglione, S.~Leps\o~y, A.~Fiandrotti, and G.~Francini, ``Learning sparse
  neural networks via sensitivity-driven regularization,'' in \emph{Advances in
  Neural Information Processing Systems 31}, S.~Bengio, H.~Wallach,
  H.~Larochelle, K.~Grauman, N.~Cesa-Bianchi, and R.~Garnett, Eds.\hskip 1em
  plus 0.5em minus 0.4em\relax Curran Associates, Inc., 2018, pp. 3878--3888.
  [Online]. Available:
  \url{http://papers.nips.cc/paper/7644-learning-sparse-neural-networks-via-sensitivity-driven-regularization.pdf}
\BIBentrySTDinterwordspacing

\bibitem{zhu2018improving}
X.~Zhu, W.~Zhou, and H.~Li, ``Improving deep neural network sparsity through
  decorrelation regularization.'' in \emph{IJCAI}, 2018, pp. 3264--3270.

\bibitem{sun2016cvpr}
Y.~Sun, X.~Wang, and X.~Tang, ``Sparsifying neural network connections for face
  recognition,'' in \emph{The IEEE Conference on Computer Vision and Pattern
  Recognition (CVPR)}, June 2016.

\bibitem{komlos2000regularity}
J.~Koml{\'o}s, A.~Shokoufandeh, M.~Simonovits, and E.~Szemer{\'e}di, ``The
  regularity lemma and its applications in graph theory,'' in \emph{Summer
  School on Theoretical Aspects of Computer Science}.\hskip 1em plus 0.5em
  minus 0.4em\relax Springer, 2000, pp. 84--112.

\bibitem{kalantari1997}
\BIBentryALTinterwordspacing
B.~Kalantari, L.~Khachiyan, and A.~Shokoufandeh, ``On the complexity of matrix
  balancing,'' \emph{SIAM J. Matrix Anal. Appl.}, vol.~18, no.~2, pp. 450--463,
  Apr. 1997. [Online]. Available:
  \url{http://dx.doi.org/10.1137/S0895479895289765}
\BIBentrySTDinterwordspacing

\end{thebibliography}

\end{document}